\documentclass[letterpaper]{article} 
\usepackage[preprint]{aaai2027}  

\usepackage[table]{xcolor}   
\definecolor{hlgreen}{RGB}{76,175,80}
\newcommand{\hla}{\cellcolor{hlgreen!45}}  
\newcommand{\hlb}{\cellcolor{hlgreen!26}}  
\newcommand{\hlc}{\cellcolor{hlgreen!13}}  

\usepackage[hyphens]{url}  
\usepackage{graphicx} 
\usepackage{natbib}  
\usepackage{caption} 
\usepackage{algorithm}
\usepackage{algorithmic}
\usepackage{newfloat}
\usepackage{listings}
\DeclareCaptionStyle{ruled}{labelfont=normalfont,labelsep=colon,strut=off} 
\floatstyle{ruled}
\newfloat{listing}{tb}{lst}{}
\floatname{listing}{Listing}

\usepackage{booktabs}
\usepackage{amsmath}
\usepackage{amssymb}   
\usepackage{amsthm}    
\newtheorem{proposition}{Proposition}
\usepackage{multirow}  
\usepackage{xcolor}
\usepackage{xspace}

\newcommand{\methodname}{AdaMem\xspace}

\usepackage{comment}

\title{\methodname: Adaptive Memory Token Allocation for Soft Compression in Retrieval-Augmented Generation}
\author{
    Artem Sakhno,
    Grigorii Davydenko,
    Omar Zoloev,\\
    Julia Belikova,
    Andrey Savchenko,
    Maksim Makarenko\thanks{Correspondence to: sakno.ad18@physics.msu.ru, maksim.makarenko.research@gmail.com}
}
\affiliations{}

\begin{document}

\maketitle

\begin{abstract}
Retrieval-augmented generation (RAG) improves language models with retrieved evidence, but processing many long passages is costly and can introduce distracting information. Soft compression addresses this challenge by encoding passages as compact sequences of continuous memory embeddings before generation. However, existing methods typically assign each retained passage an identical number of memory embeddings, irrespective of its query-specific relevance. To address this, we propose AdaMem, a relevance-guided soft-compression framework that maps learned passage-relevance estimates to a query-dependent allocation of a fixed memory-token budget. A shared query-conditioned compressor produces both continuous passage memories and relevance scores in a single pass; a deterministic allocation rule assigns more memory tokens to higher-scoring passages and can omit low-scoring ones. Across six open-domain QA benchmarks, AdaMem consistently outperforms OSCAR (the closely matched soft-compression baseline that uses uniform allocation) as well as other soft-compression methods at matched memory budgets.  
Under standard 16$\times$ compression, AdaMem improves sub-string match by up to 3.2 points (5.5\%) over uniform allocation baseline, with an average relative gain of 3.4\%; under aggressive 64$\times$ compression the average relative gain grows to 14.6\%, with a maximum of 9.8 points (19.7\%) on PopQA. AdaMem matches the answer quality of the uncompressed at up to 4$\times$ lower inference latency than full context baseline. AdaMem retains an efficiency profile comparable
to the uniform-compression baseline, while achieving up to $4\times$ lower inference latency than full-context inference.
Thus, relevance-guided memory allocation is particularly effective when retrieval pools are large and the available memory budget is tight.
\end{abstract}


\section{Introduction}

Retrieval-augmented generation (RAG) augments language models with external knowledge retrieved from large document collections, improving performance on knowledge-intensive tasks and grounding generation in retrieved evidence \cite{lewis2020retrieval}. However, processing many retrieved documents is computationally expensive~\citep{xu2024recomp,louis2026oscar}. Moreover, irrelevant passages can distract the generator and degrade answer quality~\citep{amiraz-etal-2025-distracting}. Increasing the context window does not fully resolve this issue, as language models may use information in long multi-document inputs unevenly~\citep{liu-etal-2024-lost}. Even when total context length is fixed, increasing the number of retrieved documents can reduce QA performance, suggesting that multi-document processing is a distinct challenge from long-context handling~\citep{levy-etal-2025-documents}. Efficiently selecting and representing useful retrieved evidence is therefore central to scalable and reliable RAG.

\begin{figure}[t]
    \centering
    \includegraphics[width=\columnwidth]{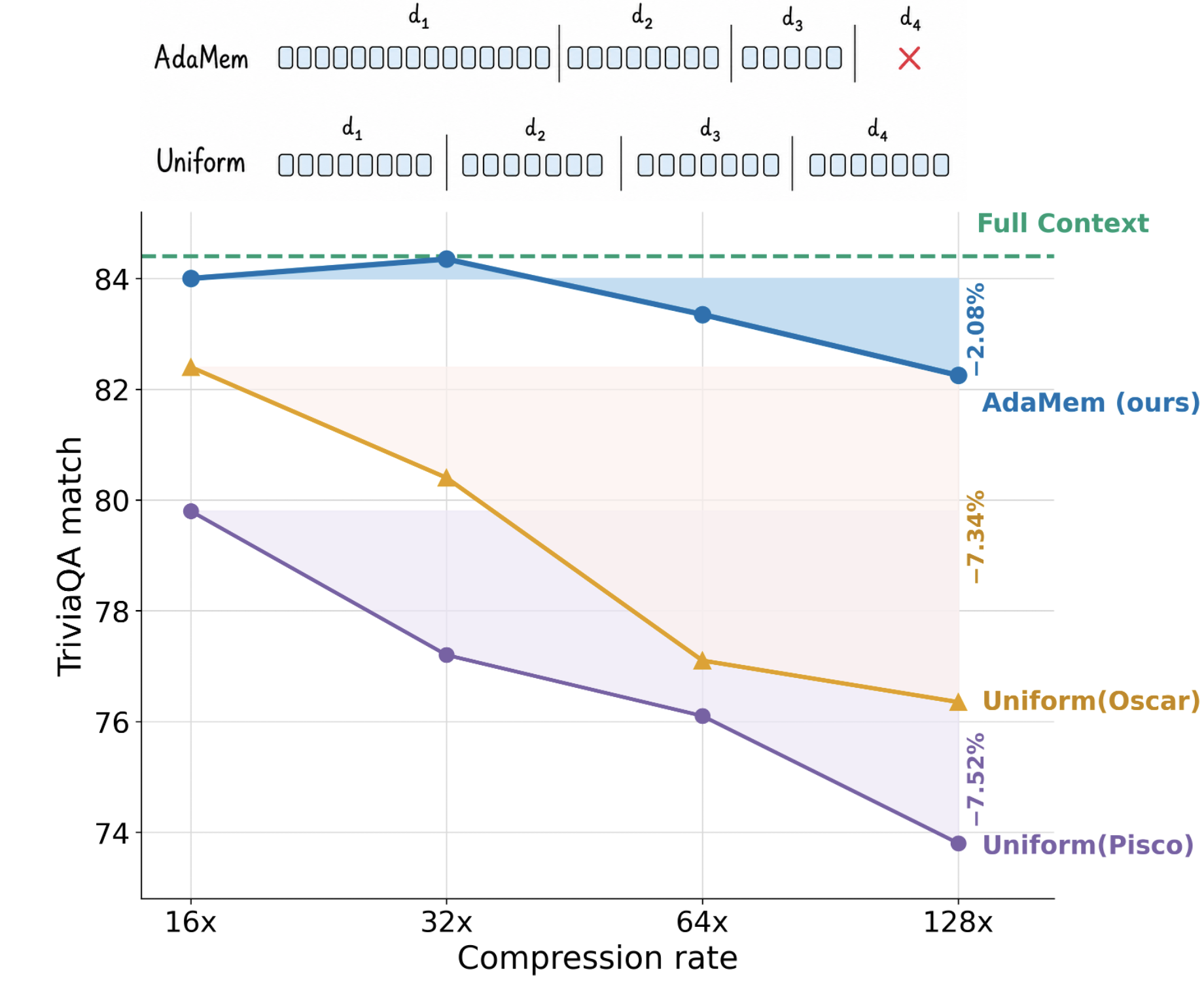}
    \caption{
        TriviaQA match as a function of the compression rate
        on 25 retrieved documents.
    }
    \label{fig:triviaqa-compression}
\end{figure}

A natural solution is to compress retrieved evidence before generation. \emph{Hard compression} shortens the text via sentence selection or token removal \cite{jiang2023llmlingua,xu2024recomp,pan2024llmlingua}, preserving interpretability but discarding information that may be collectively useful. \emph{Soft compression} maps each passage to a short sequence of continuous memory embeddings consumed directly by the decoder \cite{cheng2024xrag,chevalier2023adapting,ge2024icae}, achieving much higher compression rates. Recent methods such as OSCAR \cite{louis2026oscar} jointly produce passage representations and relevance scores in one pass, using scores only to order or discard passages, after which every retained document receives the same number of memory tokens. This binary allocation is suboptimal when passage utility is graded. Thus, the key question is not only which documents to retain, but how to distribute a fixed memory budget among documents with heterogeneous utility.


This setting also exposes an important limitation of prevailing evaluation protocols for soft-compressed RAG. Open-domain QA benchmarks typically provide question-answer supervision rather than exhaustive utility annotations for every document in a retrieved candidate pool~\citep{kwiatkowski-etal-2019-natural,joshi-etal-2017-triviaqa,mallen-etal-2023-trust}, while soft-compression models are commonly trained and evaluated on relatively small retrieved sets~\citep{cheng2024xrag,louis2025pisco,louis2026oscar}. Because small top-$k$ pools contain few candidates and exclude most lower-ranked retrievals, they expose models to fewer marginal and irrelevant passages and a narrower range of passage utility, making the benefits of reranking difficult to measure. In larger candidate pools, reranking must distinguish answer-bearing evidence from numerous marginal documents~\citep{nogueira2019passage,li-ouyang-2025-knowledge}, while the total representation budget cannot grow proportionally with the number of candidates. Evaluating under these conditions is therefore necessary to assess not only which documents should reach the generator, but also how the finite memory budget should be distributed among them.

Motivated by these limitations, we propose \textbf{\methodname}, a framework that transforms reranking from binary selection into relevance-aware memory allocation. A shared compressor produces query-conditioned memories and scores; a principled softmax allocation, grounded in logarithmic utility theory, distributes a fixed total memory budget across passages according to their estimated utility. The temperature interpolates between uniform allocation and hard top-\(k\) selection. Here are our main contributions:

\begin{itemize}
\item We introduce relevance-aware memory allocation for soft-compressed RAG: each document receives an adaptive memory budget determined by its estimated relevance (Figure~\ref{fig:triviaqa-compression}), concentrating capacity on answer-bearing evidence while preserving compact representations for complementary context. This adaptivity makes \methodname substantially more robust to extreme compression: between \(16\times\) and \(128\times\) its quality degrades by only about 2\%, compared with over 7\% for uniform-allocation baselines such as OSCAR.

\item Across six open-domain QA benchmarks, \methodname outperforms every evaluated compression baseline at matched budgets: at $16\times$ it improves over uniform allocation by $3.4\%$ on average (up to $5.5\%$) and over the strongest hard-compression baseline by $9.7\%$ despite an $8\times$ smaller context. Quality is also more stable under inference-time compression: moving from $16\times$ to $64\times$ costs \methodname $2.8\%$ on average, against $12.2\%$ for uniform allocation. At different compression rates, \methodname retains up to $99\%$ of full-context quality at $4\times$ lower latency.


\item We release \textbf{KILT-SCR} (Soft Compression and Reranking), a large-candidate-pool resource for reranking and context compression, with 25 passages per query, 196,916 training instances, and evaluation data from six QA benchmarks. The training split use answer-grounded filtration to ensure supporting retrieved evidence and gold-consistent generated targets.

\end{itemize}

\section{Related Work}
\label{sec:related_work}
Prior work on context compression follows two main paradigms: \emph{hard compression}, which shortens retrieved text, and \emph{soft compression}, which replaces it with continuous representations.

\paragraph{Hard Compression.}
\label{sec:hard_compression}

These methods retain the generator's textual interface by selecting or rewriting useful context. LLMLingua~\cite{jiang2023llmlingua} uses perplexity-based token removal; LongLLMLingua~\cite{jiang2024longllmlingua} adds query awareness; LLMLingua-2~\cite{pan2024llmlingua} formulates compression as token classification. RECOMP~\cite{xu2024recomp} learns extractive and abstractive compressors optimized for downstream performance. Such methods are interpretable and model-agnostic, but their discrete outputs must be re-encoded per query, limiting compression rates.

\paragraph{Soft Compression.}
\label{sec:soft_compression}

Soft compression maps text to continuous memory embeddings consumed directly by the generator. Gist tokens~\cite{mu2023gist}, Auto-Compressors~\cite{chevalier2023adapting}, and In-Context Autoencoders~\cite{ge2024icae} establish that LLMs can condition on latent representations. For RAG, xRAG~\cite{cheng2024xrag} projects a pre-computed embedding into generator space; 500xCompressor~\cite{li2025500x} demonstrates extreme compression; PISCO~\cite{louis2025pisco} trains a compressor via distillation. OSCAR~\cite{louis2026oscar} is most closely related: it performs online query-conditioned compression and jointly produces representations and relevance scores. However, as in most prior pipelines, relevance determines ordering or selection, while each retained passage receives the same fixed number of memory tokens. AdaMem instead distributes a fixed global budget non-uniformly according to estimated utility.

\paragraph{Capacity of Soft Memories.} 
\label{sec:capacity}
\citet{kuratov2025cramming} show a single vector can encode sequences at compression ratios approaching 1500$\times$, and that compressibility depends more on resolvable uncertainty than raw length; we build on this observation but study how a \emph{fixed} capacity should be \emph{distributed across multiple passages} rather than the ceiling of a single memory. 

\section{Method}

\begin{figure*}[t]
    \centering
    \includegraphics[width=\textwidth]{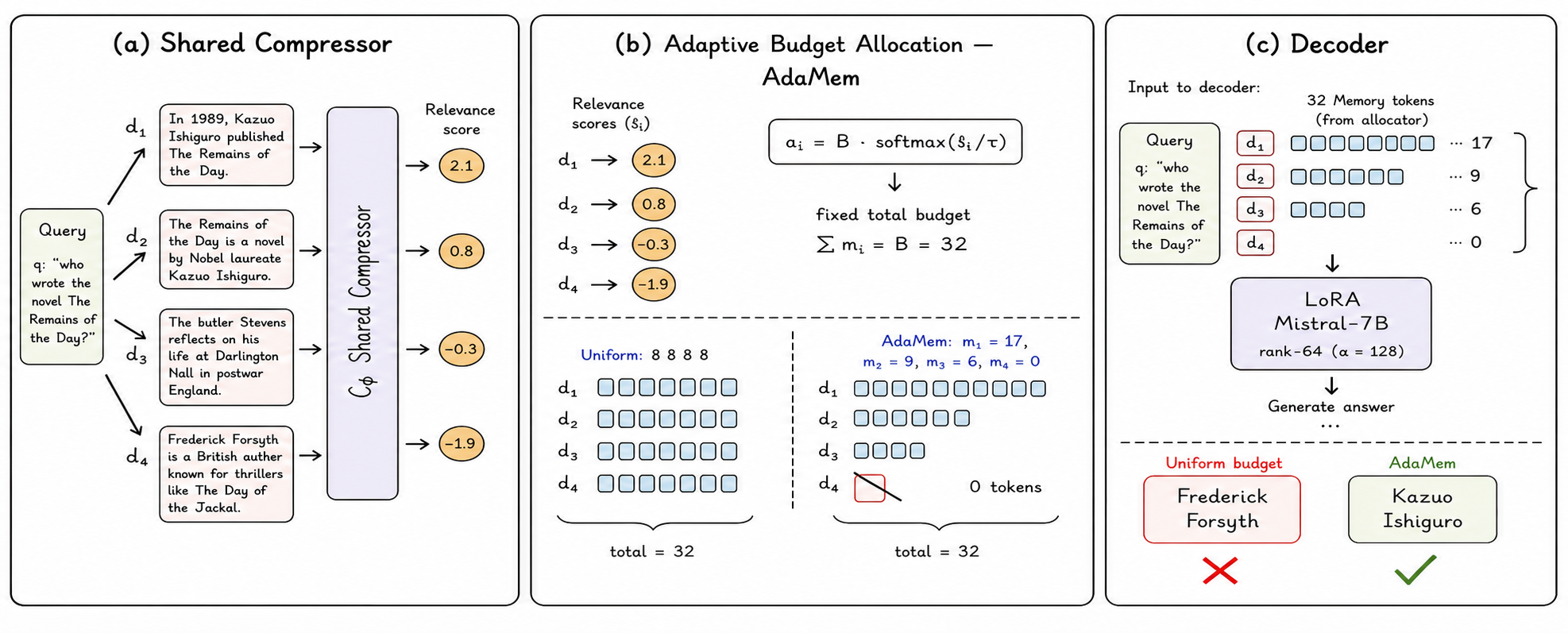}
\caption{
Overview of \methodname.
(a) A shared compressor jointly produces passage memory representations and relevance scores.
(b) The scores determine a query-dependent allocation of the fixed budget $B$ via softmax and largest-remainder rounding.
(c) The allocated memories are projected, ordered by relevance, and passed to a LoRA-adapted decoder for answer generation.
}
    \label{fig:method}
\end{figure*}

\paragraph{Problem Setup.}
We consider RAG in which a query $q$ is answered from
a pool of $K$ retrieved passages $\mathcal{D}=\{d_i\}_{i=1}^{K}$
(Figure~\ref{fig:method}a). Soft compression represents each passage $d_i$ by a short sequence of continuous \emph{memory embeddings} that
condition the decoder in its place. Writing $m_i$ for the number of memory
embeddings assigned to passage $i$, the total $B=\sum_{i=1}^{K} m_i$ is the
memory budget that governs the decoder's context length, and hence the cost of
generation. Existing soft-compression methods assign every retained passage the
same $m_i$, or set it by a length-based rule, spreading the budget without regard
to how useful each passage is for the query. We instead aim to distribute a
fixed budget $B$ according to query-specific utility, giving more memory
embeddings to answer-bearing passages, fewer to marginal ones, and none to
irrelevant ones, while keeping the total $\sum_i m_i=B$ unchanged. This poses two coupled subproblems:
estimating passage relevance, and turning those estimates into an integer
per-passage allocation of the budget. We next describe the shared compressor
that produces passage memories and relevance scores in a single pass, the rule
that converts these scores into a per-passage allocation, and the sense in
which that rule is optimal (Figure~\ref{fig:method}).

\paragraph{Query-Conditioned Compression and Adaptive Allocation.}
For each query--passage pair, we form
\begin{equation}
x_i=\texttt{Query: }q\texttt{\textbackslash nDocument: }d_i,
\end{equation}
truncate it to 128 tokens, and append a fixed maximum number of
\texttt{<MEM>} tokens and one \texttt{<RERANK>} token. A shared compressor
$C_{\phi}$ encodes $x_i$ once: the final-layer state of \texttt{<RERANK>}
yields a relevance score,
\begin{equation}
s_i=\mathbf{w}_{r}^{\top}C_{\phi}(x_i)_{\texttt{<RERANK>}}+b_r,
\end{equation}
whereas the states at the \texttt{<MEM>} positions form an ordered bank of
candidate memory embeddings for passage $d_i$ (Figure~\ref{fig:method}a).

Given the scores of the $K$ retrieved passages, we standardize them within the
pool as $\widehat{s}_i=(s_i-\bar{s})/(\sigma_s+\epsilon)$ and allocate a fixed
global memory budget $B$ according to
\begin{equation}
a_i=B\,\operatorname{softmax}(\widehat{s}_i/\tau),
\end{equation}
where $\tau>0$ controls allocation sharpness. We convert the fractional shares
$\{a_i\}_{i=1}^{K}$ into integer counts $\{m_i\}_{i=1}^{K}$ using
largest-remainder rounding, preserving $\sum_i m_i=B$; passages with $m_i=0$
are omitted \citep{Farber_2006} (Figure~\ref{fig:method}b). We set $B$ to the total number of embeddings
assigned by a uniform length-based reference, where passage $i$ would receive
$m_i^{(0)}=\lfloor\ell_i/\rho\rfloor+1$ embeddings for truncated length
$\ell_i$ and compression factor $\rho=16$.

For each passage, we retain its first $m_i$ \texttt{<MEM>} states, project
them into the decoder embedding space with a two-layer MLP, and concatenate
the resulting variable-length sequences in decreasing relevance order. The
decoder conditions on the query and these allocated memories to generate the
answer (Figure~\ref{fig:method}c). We optimize the compressor and projection
module in full, and adapt the decoder using rank-64 LoRA modules in all linear
layers ($\alpha=128$, dropout $0.1$); its token embeddings and
language-modeling head are also updated.

\begin{table*}[h!]
\centering
\small
\setlength{\tabcolsep}{4.5pt}
\renewcommand{\arraystretch}{1.0}
\begin{tabular}{lccccccc}
\toprule
Method & TriviaQA & HotpotQA & NQ & PopQA & ASQA & BioASQ & Compr.\ rate \\
\midrule
No Context   & 71.9 & 32.3 & 41.7 & 26.5 & 48.4 & 29.4 & -- \\
Full Context & 84.4 & 51.5 & 66.3 & 60.2 & 70.9 & 33.1 & -- \\
\midrule
RECOMP       & 76.0 & 37.0 & 50.0 & 34.8 & 55.9 & 29.0 & 200 \\
LLMLingua-2  & 80.7 & 43.3 & 60.9 & 44.3 & 63.0 & \hla \textbf{31.2} & 2 \\
LLMLingua    & 76.2 & 37.2 & 55.0 & 39.0 & 58.4 & 27.2 & 2 \\
\midrule
xRAG         & 26.9 & 13.9 & 9.3 & 11.2 & 11.9 & 8.8 & 147 \\
PISCO        & 79.8 & 38.4 & 58.9 & 53.6 & 61.0 & 27.7 & 16 \\
\midrule
\multirow{3}{*}{OSCAR}
             & 82.4 & 44.1 & \hlc 62.8 & 58.4 & 66.5 & 28.6 & 16 \\
             & 80.4 & 40.0 & 60.0 & 56.9 & 66.8 & 28.4 & 32 \\
             & 77.1 & 36.3 & 54.2 & 49.8 & 58.8 & 26.0 & 64 \\
\midrule
\multirow{3}{*}{\textbf{\methodname (proposed)}}
             & \hla \textbf{84.0} & \hla \textbf{46.3} & \hla \textbf{64.0} & \hlb 61.6 & \hla \textbf{69.1} & \hlc 29.2 & 16 \\
             & \hlb 83.9 & \hlb 45.4 & \hlb 63.9 & \hla \textbf{61.7} & \hlb 68.5 & \hlb 29.3 & 32 \\
             & \hlc 83.3 & \hlc 44.2 & 61.4 & \hlc 59.6 & \hlc 67.1 & 28.8 & 64 \\
\bottomrule
\end{tabular}
\caption{Match (\%) performance on six QA benchmarks with 25 retrieved documents. Hard- (RECOMP, LLMLingua,
LLMLingua-2) and soft-compression (xRAG, PISCO) baselines are shown at their
respective rates; OSCAR (uniform) and \methodname (relevance-aware) share one
compressor and are reported at $16$/$32$/$64\times$, with $32$/$64\times$ at
inference only. Per column, the top-3 compression methods are shaded green
(darkest\,=\,best). The bootstrap-estimated standard deviations for TriviaQA, HotpotQA, NQ, PopQA, ASQA, and BioASQ are 1.0, 1.1, 1.1, 1.1, 1.5, and 1.2, respectively. Generation temperature: $\tau=1$.}
\label{tab:main-results}
\end{table*}

\paragraph{Budget-Allocation View.}
The allocation rule is not a heuristic. If the value of a passage grows
logarithmically with its token count, the standard diminishing-returns model
of resource allocation~\citep{Kelly_1998}, then
$B\,\operatorname{softmax}(\widehat{s}_i/\tau)$ is the exact optimal split of
the budget for utilities $w_i=e^{\widehat{s}_i/\tau}$
(Proposition~\ref{prop:app-exact}, Appendix~\ref{app:allocation}). The
temperature interpolates between the two strategies of prior work:
$\tau\to\infty$ recovers the uniform split, $\tau\to0$ recovers
rerank-and-truncate, and the integer rounding step supplies passage dropping.
The classical exponential-distortion model of bit
allocation~\citep{Huang_1963,Shoham_1988} refines this picture with a
quantitative prediction: the softmax rule matches its reverse water-filling
optimum to first order precisely when
\begin{equation}
\tau=\frac{\gamma B}{K\sigma},
\end{equation}
where $\gamma$ is the distortion decay rate and $\sigma$ the spread of the
log-utilities (Appendix~\ref{app:allocation}).The optimal temperature should therefore decrease as the per-passage budget
$B/K$ becomes smaller. This trend is reflected in the experiments, where the
best $\tau$ decreases from $1.0$ at $16\times$ to $0.5$ at $64\times$
compression (Figure~\ref{fig:tau-sweep}).

\paragraph{Training Objective.}
The discrete allocation uses detached scores, we therefore supervise passage scoring directly by distilling cross-encoder teacher scores \(y_i\) via MSE:
\begin{equation}
\mathcal{L}_{\mathrm{rank}}=
\frac{1}{K}\sum_{i=1}^{K}(s_i-y_i)^2.
\end{equation}
The predicted scores are then used to allocate memory tokens and construct the
decoder context. For answer generation, we follow PISCO~\citep{louis2025pisco} and use
sequence-level knowledge distillation~\citep{kim-rush-2016-sequence}. Specifically, given a teacher-generated target answer
$a^{\mathrm{raw}}$, we optimize the autoregressive generation loss
$\mathcal{L}_{\mathrm{gen}}$ while masking the prompt prefix. The final
objective is
\begin{equation}
\mathcal{L}=
\mathcal{L}_{\mathrm{gen}}+
0.1\,\mathcal{L}_{\mathrm{rank}}.
\end{equation}
Thus, the shared compressor receives both generation and relevance-supervision
gradients, whereas the scoring head is calibrated explicitly by teacher-score
distillation.

\paragraph{Three-Stage Training.}
We train \methodname in three stages. \textit{Stage 1} pretrains the compressor and projection module with query-independent autoencoding and text-continuation objectives, teaching it to encode text into decoder-consumable representations. \textit{Stage 2} performs joint query-dependent training on an intermediate retrieval-augmented corpus, generating teacher answers from query-conditioned passage memories while initializing the reranker head via teacher-score distillation, adapting the compressor from generic to query-aware evidence representation. \textit{Stage 3} fine-tunes the full model on the filtered KILT-SCR corpus with full retrieved candidate pools, jointly optimizing generation and reranker distillation so that relevance scores directly determine passage ordering and memory allocation under the fixed budget at the target retrieval scale.


\begin{figure*}[ht!]
    \centering
    \includegraphics[width=\textwidth, height=8cm]{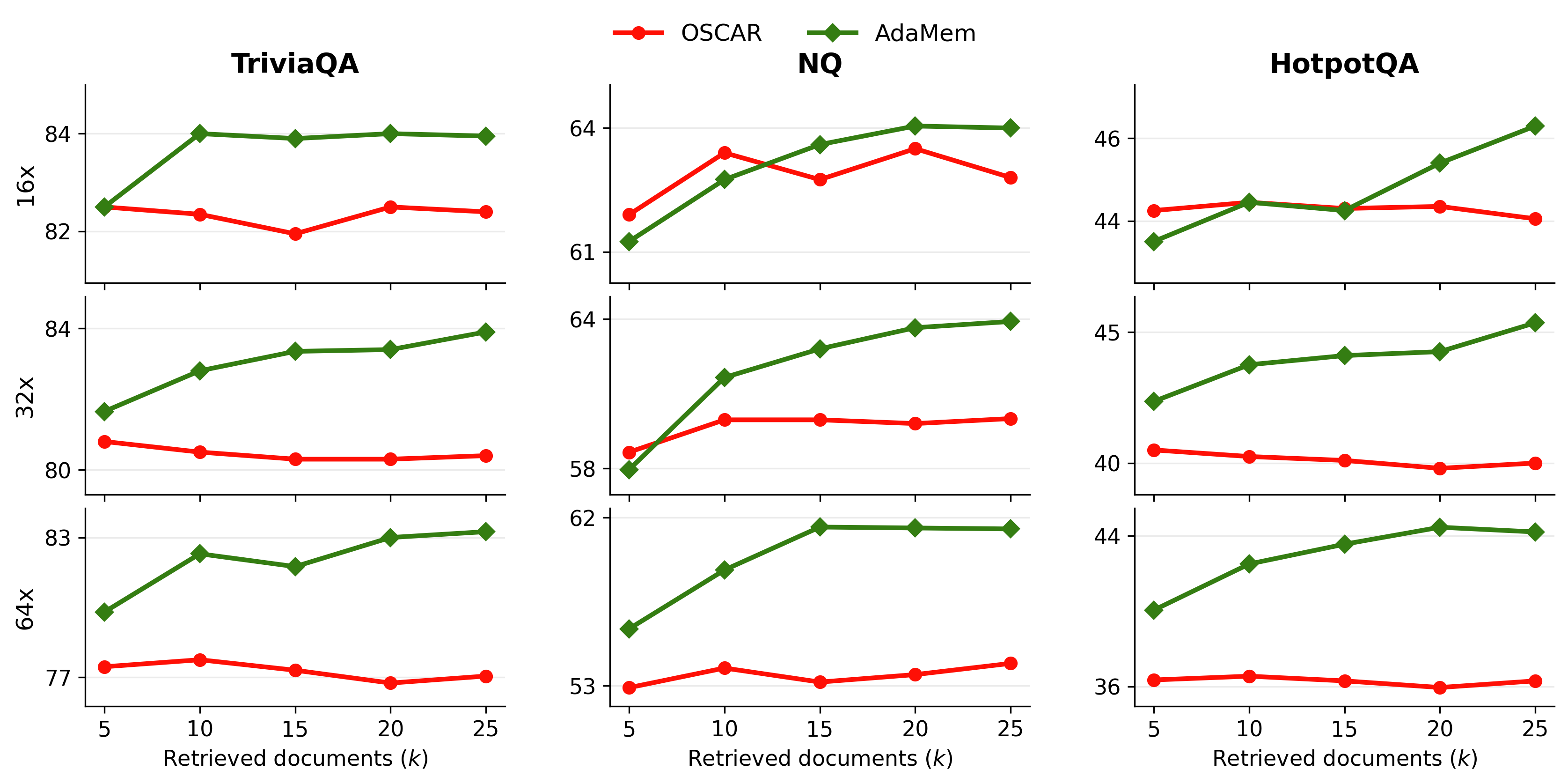}
    \caption{
        Match (\%) versus retrieval depth. AdaMem benefits from deeper retrieval more consistently than OSCAR, particularly under stronger compression.
    }
    \label{fig:oscar-ada-mem-delta}
\end{figure*}

\section{Experiments and Results}
\paragraph{KILT-SCR.} We construct KILT-SCR as a large-candidate-pool resource with training
and evaluation splits. Its training split is derived from the training sets of Natural Questions~\citep{kwiatkowski-etal-2019-natural}, HotpotQA~\citep{yang-etal-2018-hotpotqa}, and TriviaQA~\citep{joshi-etal-2017-triviaqa}. The evaluation split contains examples from the evaluation sets of these three datasets, together with PopQA~\citep{mallen-etal-2023-trust}, ASQA~\citep{stelmakh-etal-2022-asqa}, and BioASQ~\citep{krithara2023bioasq,tsatsaronis2015overview}. For every query, we retrieve a fixed pool of 25 passages from KILT-100w~\citep{petroni-etal-2021-kilt} using \texttt{BAAI/bge-large-en-v1.5}~\citep{xiao2024cpackpackedresourcesgeneral}.

For each training query, we score candidates with
\texttt{cross-encoder/ms-marco-MiniLM-L6-v2}~\citep{bajaj2018msmarcohumangenerated}
and provide the ten highest-scoring passages to
Mistral-7B-Instruct-v0.2~\citep{jiang2023mistral7b} to generate answer targets.
We retain only instances for which a gold-answer alias occurs both in at least
one retrieved passage and in the generated target, yielding 196,916 examples;
teacher scores are globally standardized over this split. In contrast, we apply
no answer-support or teacher-target filtering to KILT-SCR evaluation split, which contains
2,000 examples each from Natural Questions, HotpotQA, TriviaQA, and PopQA, 948
from ASQA, and 1,609 from BioASQ. At evaluation time, the learned reranker
scores the fixed candidate pool before answer generation.

\paragraph{Experimental Setup.}
\methodname uses a Llama-3.2-1B-Instruct~\citep{grattafiori2024llama3}
compressor and a Mistral-7B-Instruct-v0.2 decoder adapted with
LoRA~\citep{hu2022lora}. We first pretrain the compressor and projection module
on two million examples sampled from
\texttt{EleutherAI/SmolLM2-135M-10B}~\citep{allal2025smollm2smolgoesbig},
using autoencoding and language-modeling objectives. We then perform
query-dependent fine-tuning on \texttt{maxoul/pisco\_finetuning\_data}~\citep{louis2025pisco},
following the OSCAR training setup. Finally, we train on KILT-SCR with
the joint answer-generation and reranker-distillation objectives. For all experiments, we report Substring Match as the primary evaluation
metric. We provide additional LLM-as-a-judge metrics in the Supplementary
Material.

\paragraph{Baselines.}

We compare \methodname against hard-compression methods
(RECOMP, LLMLingua, and LLMLingua-2), and soft-compression methods
(xRAG, PISCO, and OSCAR). \textit{No Context} answers without retrieved
evidence, while \textit{Full Context} provides all retrieved passages without
compression. PISCO and OSCAR are the closest soft-compression baselines because they support
large retrieved candidate pools and integrate compression with the retrieval
pipeline. PISCO uses query-independent passage compression, whereas OSCAR, like
\methodname, performs query-conditioned compression and relevance scoring.
Both assign memory uniformly across retained passages, while \methodname
allocates a fixed global budget according to passage relevance. To isolate this
difference, we train PISCO, OSCAR, and \methodname using identical data splits,
hyperparameters, and training stages. Consequently, the comparison with OSCAR
directly measures the effect of relevance-aware memory allocation under a
matched setup.

\paragraph{Main Results.}
Table~\ref{tab:main-results} reports results with 25 retrieved documents. At the reference compression rate of 16×, \methodname achieves the best performance among compressed methods on five of six benchmarks. The only exception is BioASQ, where LLMLingua-2 achieves the strongest compressed result. Nevertheless, \methodname outperforms all evaluated soft-compression baselines.

Hard-compression methods exhibit a clear trade-off between compression and answer quality. RECOMP achieves the highest compression rate (200×), but its performance remains only moderately above the No Context baseline; for example, it improves HotpotQA from 32.3 to 37.0. Conversely, LLMLingua and LLMLingua-2 operate at only 2× compression, retaining a substantially larger text budget, yet generally remain below \methodname. For instance, at 16×, \methodname improves over LLMLingua-2 by 3.3 points on TriviaQA, 3.0 points on HotpotQA, and 17.3 points on PopQA. Thus, hard compression either requires aggressive token removal, which leaves limited useful evidence, or preserves much larger context without reaching the effectiveness of soft compression.

Among soft-compression methods, \methodname consistently outperforms OSCAR under the same 16× memory budget and yields substantial gains over PISCO. 
xRAG performs poorly in this setting because it is designed to compress a single document. When applied to long multi-document contexts, its compressed tokens become less informative and fail to preserve passage-specific, query-relevant evidence.

Both \methodname and OSCAR are trained at 16× compression. We evaluate higher compression ratios exclusively at inference time, without additional training or fine-tuning. At 32×, \methodname outperforms OSCAR at its training-time compression ratio of 16× across all benchmarks, indicating that it can use approximately half as many memory embeddings while maintaining answer quality. Even at 64×, \methodname surpasses OSCAR at 16× on five of six benchmarks. Moreover, \methodname retains a clear advantage over OSCAR at the same 64× compression rate. For example, on TriviaQA, increasing compression from 16× to 64× reduces performance by only 0.7 points for \methodname, compared with 5.3 points for OSCAR. This robustness suggests that relevance-aware allocation preserves salient evidence when the memory budget is severely constrained.

Finally, at 16×, \methodname closely approaches Full Context performance despite replacing the retrieved text with a substantially shorter continuous representation. On TriviaQA, it is within 0.4 points of Full Context, while on PopQA it exceeds Full Context by 1.4 points. Hence, selective memory allocation can both reduce computational cost and mitigate interference from irrelevant retrieved passages.

\begin{figure}[ht!]
    \centering
    \includegraphics[width=\columnwidth]{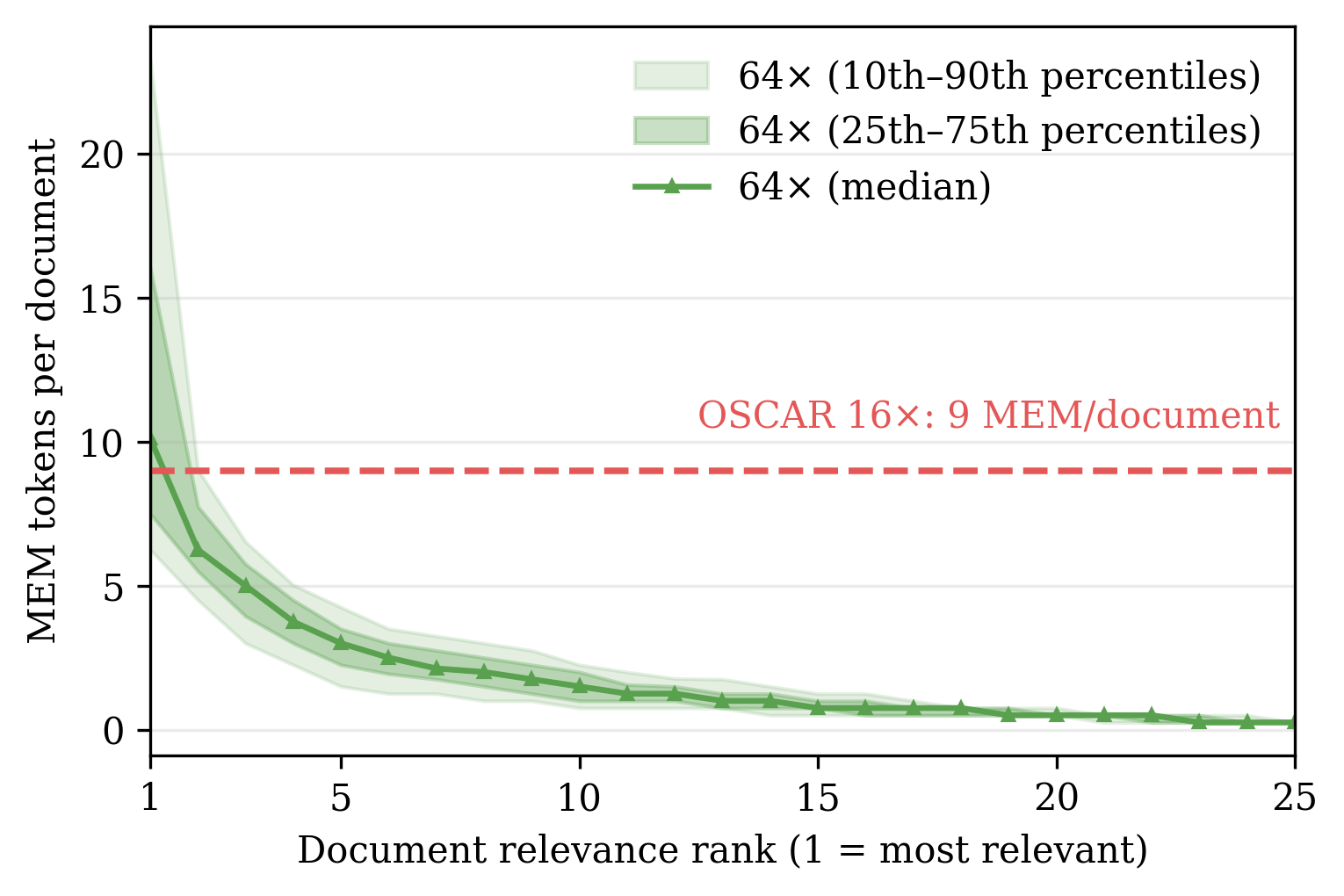}
    \caption{
    Memory-token allocation by document relevance rank at \(64\times\) compression.
    The dashed line denotes OSCAR's fixed \(16\times\) allocation.
    }
    \label{fig:mem-tokens-by-relevance}
\end{figure}

\paragraph{Memory Allocation Behavior.} Figure~\ref{fig:mem-tokens-by-relevance} illustrates the allocation mechanism
under this highly constrained \(64\times\) setting. For the most relevant
documents, \methodname allocates a median of roughly 10 memory tokens, already
exceeding OSCAR's fixed budget of 9 tokens per document at \(16\times\).
The 90th percentile is substantially higher, indicating that for some queries,
\methodname assigns considerably more capacity to the most relevant document.
In contrast, the allocation decreases sharply with relevance rank and
approaches zero for lower-ranked documents. Thus, \methodname concentrates
capacity on query-relevant evidence while assigning minimal capacity to noisy
passages, reducing their potential interference in the compressed context.

\paragraph{Robustness to Retrieval Depth and Compression.}

Figure~\ref{fig:oscar-ada-mem-delta} compares \methodname and OSCAR across retrieval depths \(k\in\{5,10,15,20,25\}\) and compression ratios of \(16\times\), \(32\times\), and \(64\times\) at $\tau$=1. \methodname benefits more consistently from deeper retrieval, whereas OSCAR remains nearly constant as additional lower-ranked documents are introduced.

This difference is most pronounced under aggressive compression. At \(64\times\), increasing \(k\) from 5 to 25 improves \methodname by 3.5 points on TriviaQA, 5.4 on Natural Questions, and 4.2 on HotpotQA, while OSCAR changes only marginally over the same range. A similar pattern holds at \(32\times\), particularly on Natural Questions and HotpotQA.

These results suggest that \methodname can extract useful evidence from less-relevant documents without allocating them the same budget as highly relevant ones. In contrast, OSCAR's fixed per-document allocation provides limited benefit from deeper retrieval, as its representation capacity is uniformly spread over an increasingly noisy document set. The effect is weaker at \(16\times\), where both methods have sufficient capacity for shallow retrieval pools, but becomes increasingly clear as the memory budget is reduced.

\begin{figure}[h!]
    \centering
    \includegraphics[width=\columnwidth]{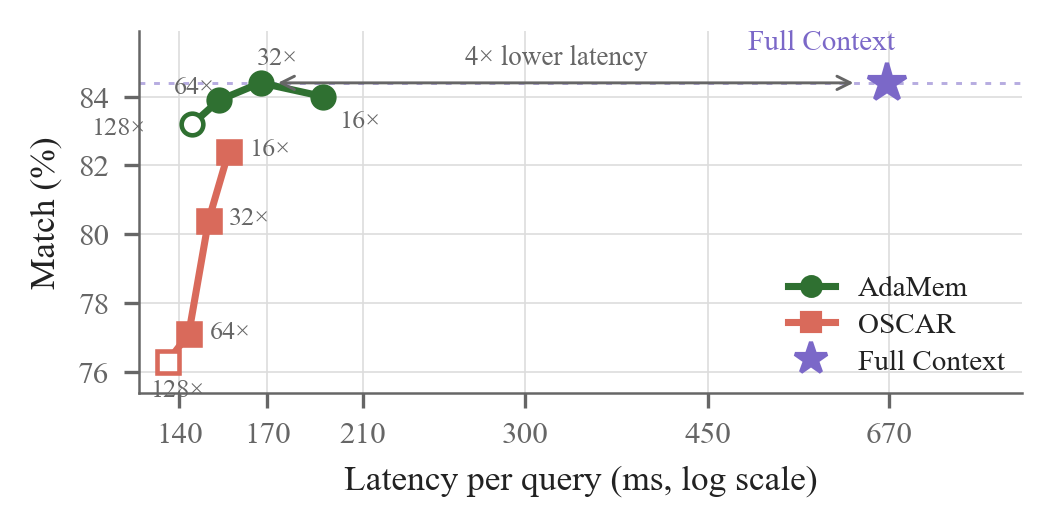}
    \caption{
    Quality-latency tradeoff on TriviaQA. \methodname shows similar latency as OSCAR and matches Full Context quality at $4.0\times$ lower latency.
    }
    \label{fig:latency-tradeoff}
\end{figure}

\paragraph{Efficiency-Performance Tradeoff.}
\label{sec:efficiency}

Figure~\ref{fig:latency-tradeoff} relates TriviaQA Match to per-query
inference latency across compression rates (theoretical optimal $\tau$ is used) .  \methodname
operates at the same low latency as prior soft compression while
delivering uncompressed-quality answers roughly \(4\times\) more efficiently than Full Context: at \(32\times\) compression it matches Full Context
quality (84.4 Match) at \(4.0\times\) lower latency and \(3.6\times\) less
compute. The two methods also trade quality for speed in sharply different
ways. Lowering the budget moves OSCAR down a steep curve, losing 5.3 Match points between \(16\times\) and \(64\times\) compression, whereas \methodname stays within 0.5 points over the same range: relevance-aware allocation
converts compression into latency savings almost without quality cost. As a result, \methodname at \(64\times\) is both faster and more accurate than uniform compression at \(16\times\).
Appendix~\ref{app:efficiency} provides additional efficiency discussion with detailed measurements, including throughput, compute, and memory information.

\paragraph{Effect of Training-Data Filtering.}

We evaluate the impact of filtered fine-tuning data. KILT-SCR retains instances where the retrieved pool contains a gold-answer alias and the teacher target is answer-consistent, reducing noise in both retrieval supervision and generation targets.

For a controlled comparison, we train the same \methodname model for one epoch on either filtered or unfiltered data under identical settings. Using 25 passages and a compression rate of 16, Table~\ref{tab:filtered-data} shows that KILT-SCR improves performance by 1.7 points on average, indicating better alignment between relevance estimation and answer generation. 

\begin{table}[h!]
\centering
\small
\begin{tabular}{lccc}
\toprule
Training data & TriviaQA & NQ & HotpotQA \\
\midrule
Unfiltered & 80.3 & 62.8 & 42.6 \\
Filtered & \textbf{82.8} & \textbf{63.3} & \textbf{44.5} \\
\midrule
Difference & +2.5 & +0.6 & +2.0 \\
\bottomrule
\end{tabular}
\caption{Effect of filtered fine-tuning data on Match (\%) at compression rate 16. Both models are trained for one epoch.}
\label{tab:filtered-data}
\end{table}

\paragraph{Ablation of Training Stages.}
\label{sec:train-stages}

Table~\ref{tab:training-stages} ablates the contribution of the three training
stages. 
Removing Stage~1 causes the largest performance drop across all datasets,
particularly on NQ and HotpotQA. This result shows that directly optimizing
query-dependent generation and relevance-supervision objectives is insufficient
to learn effective continuous passage representations from scratch. 
Removing either Stage~2 or Stage~3 produces smaller but consistent
degradations. Without Stage~2, the model loses the intermediate adaptation from
generic soft compression to query-conditioned evidence representation. Without
Stage~3, it is not optimized for relevance estimation and memory allocation
over the full retrieved candidate pools used at inference time.

\begin{table}[h!]
\centering
\small
\begin{tabular}{lccc}
\toprule
Training configuration & TriviaQA & NQ & HotpotQA \\
\midrule
w/o Stage 1
& 71.30 & 44.45 & 29.35 \\
w/o Stage 2
& 81.35 & 61.65 & 43.10 \\
w/o Stage 3
& 81.05 & 59.30 & 42.70 \\
\midrule
Full three-stage training
& \textbf{82.80} & \textbf{63.30} & \textbf{44.50} \\
\bottomrule
\end{tabular}
\caption{Effect of the three-stage training procedure at \(16\times\)
compression with \(k=25\).}
\label{tab:training-stages}
\end{table}

Overall, the results show that each stage addresses a distinct requirement:
learning general-purpose soft memories, adapting them to query-dependent
evidence, and aligning relevance-aware allocation with the target retrieval
setting.

\paragraph{Effect of Allocation Temperature.}
\label{sec:tau-dependence}

The temperature $\tau$ controls allocation sharpness: larger values approach
uniform allocation, while smaller values concentrate memory on the most
relevant passages. Our allocation view predicts that $\tau$ should decrease
as compression becomes stronger, since a smaller memory budget must be focused
on fewer high-utility passages.

Figure~\ref{fig:tau-sweep} confirms this prediction on TriviaQA. The optimal
temperature decreases from $\tau^*=1.0$ at $16\times$ compression to $0.7$ at
$32\times$ and $0.5$ at $64\times$ and $128\times$. Moreover, a theoretically
predicted schedule, $\tau \propto B(\rho)$ anchored at $16\times$, consistently
outperforms a fixed $\tau=1$ and closely tracks the empirical optimum, despite
a small gap at higher compression.

This effect contributes to the robustness in Figure~\ref{fig:triviaqa-compression}, which uses the theoretical temperature schedule without any empirical tuning.

\begin{figure}[h!]
    \centering
    \includegraphics[width=\columnwidth]{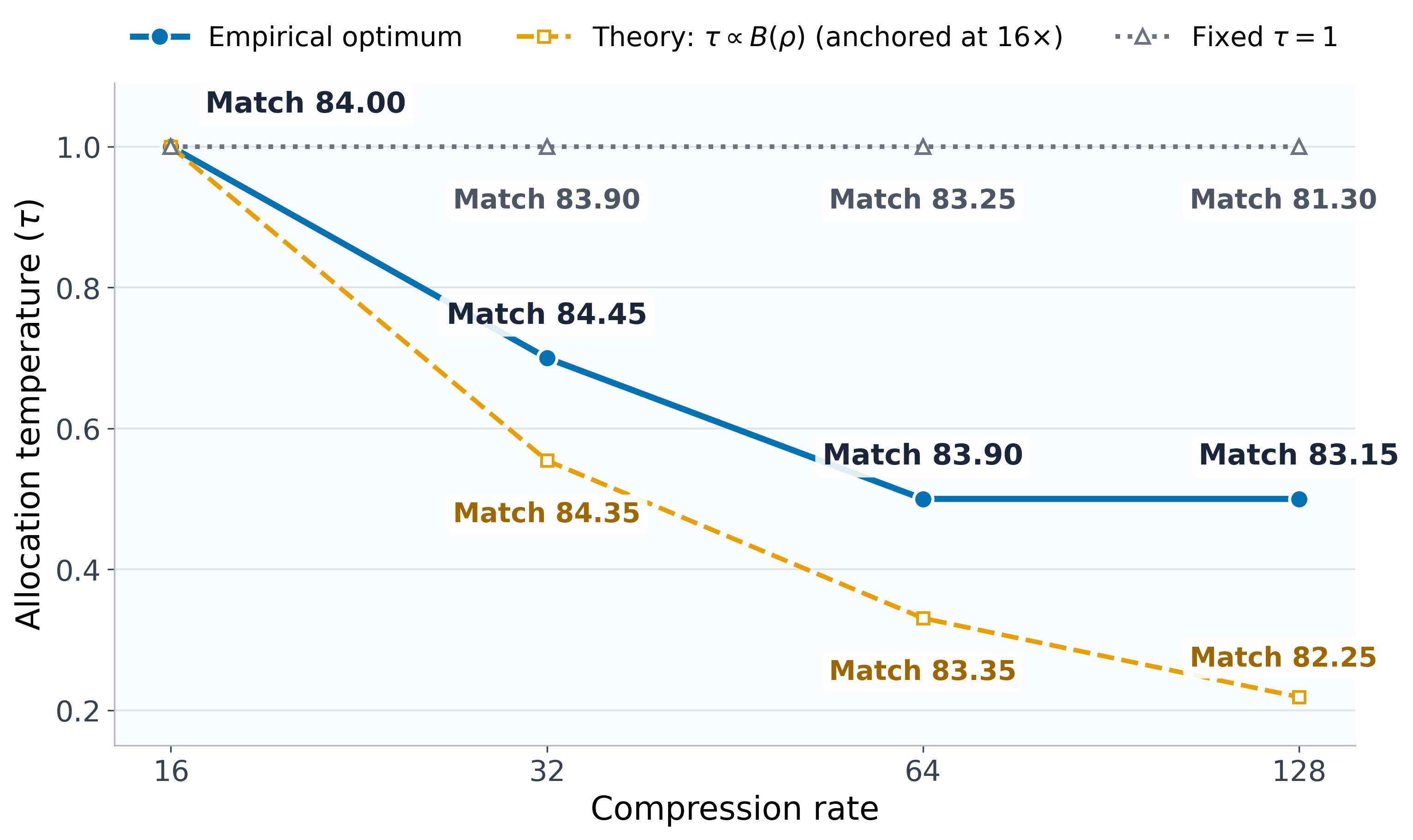}
    \caption{
        Optimal allocation temperature on TriviaQA across compression rates.
        The analytically predicted schedule ($\tau \propto B(\rho)$, anchored at
        $16\times$) follows the empirical optimum and consistently outperforms a
        fixed $\tau=1$.
        }
    \label{fig:tau-sweep}
\end{figure}

\section{Conclusion}

We introduced \methodname, a relevance-aware allocation framework for soft-compressed RAG. Instead of uniformly representing retained passages, \methodname converts query-dependent relevance scores into a non-uniform allocation of a fixed global memory budget, concentrating capacity on answer-bearing evidence while retaining limited capacity for complementary passages. 
The resulting allocation is the exact optimum of a proportional-fair
objective under an assumed logarithmic utility model, and is a first-order approximation to the reverse-water-filling optimum of classical exponential-distortion bit allocation.

\methodname establishes a new state of the art among soft-compression methods across six evaluated QA benchmarks. At \(16\times\) compression, it improves over baselines, the strongest prior soft-compression method, by an average of \(3.4\%\) in relative terms. The advantage becomes substantially larger as the memory budget decreases. At \(64\times\) compression, the average relative improvement reaches \(14.6\%\), with gains of up to \(21.8\%\) on HotpotQA. At \(32\times\), \methodname exceeds OSCAR at \(16\times\) on every benchmark, achieving an average relative improvement of \(2.9\%\). These results show that adaptive allocation is particularly effective under aggressive compression, where large retrieval pools must be represented within strict memory constraints.

Finally, we introduce KILT-SCR, a training and evaluation resource for the joint study of reranking and compression. Constructed to retain answer-bearing retrieved evidence and answer-consistent teacher targets, it enables controlled comparison of reranking and memory-allocation strategies over large candidate pools with heterogeneous passage utility.

Several future research directions remain open. The temperature could be learned or predicted per query rather than set per compression rate, allowing allocation sharpness to adapt to the confidence of the relevance estimates. The budget-allocation view itself is not specific to QA or memory embeddings and extends naturally to other bounded representations, such as KV-cache compression and long-context summarization. 

\bibliography{aaai2027}
\newpage
\appendix
\setcounter{secnumdepth}{1}
\section{Optimality of the Memory Allocation Rule}
\label{app:allocation}

This appendix states in what sense the allocation rule is optimal. The softmax
allocation is the exact optimizer of a logarithmic-utility budget-allocation
problem (Proposition~\ref{prop:app-exact}); a remark relates it to classical
bit allocation under exponential distortion, where it is instead a first-order
approximation; and the rounding step is the nearest budget-preserving
discretization (Proposition~\ref{prop:app-round}).

\paragraph{Setup.}
Let $K$ passages share a fixed budget of $B$ memory tokens, passage $i$
receiving $m_i$ tokens with $\sum_i m_i=B$, and let $w_i>0$ measure the utility
of passage $i$ for the query. An allocation model specifies how the value of a
passage depends on its token count; two standard choices are analyzed below.

\paragraph{Exact optimality under logarithmic utility.}
Suppose the value of a passage grows logarithmically with its token count, so
that the budget-optimal allocation solves
\begin{equation}
\max_{m\ge0}\ \sum_{i=1}^{K} w_i\ln m_i
\quad\text{subject to}\quad \sum_i m_i=B .
\label{eq:app-log}
\end{equation}
Logarithmic utility is the standard model of diminishing returns in resource
allocation; it defines the proportionally fair operating point
of~\citet{Kelly_1998} and coincides with the weighted Nash bargaining
objective. Soft memory tokens plausibly exhibit such diminishing returns: a
single token can already absorb a large but bounded amount of information
\citep{kuratov2025cramming}, so early tokens capture the most and later tokens
refine.

\begin{proposition}[Exact optimality of the softmax allocation]
\label{prop:app-exact}
For any scores $\widehat{s}\in\mathbb{R}^{K}$ and temperature $\tau>0$, the
allocation $a_i=B\,\operatorname{softmax}(\widehat{s}_i/\tau)$ is the unique
solution of \eqref{eq:app-log} with utilities $w_i=e^{\widehat{s}_i/\tau}$.
\end{proposition}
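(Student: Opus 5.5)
The plan is to solve the concave program \eqref{eq:app-log} in closed form with Lagrange multipliers, and then to check that the closed form equals $B\,\operatorname{softmax}(\widehat{s}_i/\tau)$ when $w_i=e^{\widehat{s}_i/\tau}$.

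First, I would note that every $w_i>0$, so $w_i\ln m_i\to-\infty$ as $m_i\to0^+$. Any maximizer therefore lies in the relative interior $\{m_i>0,\ \sum_i m_i=B\}$, and the nonnegativity constraints can be dropped. Only the equality constraint remains.

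Second, I would form the Lagrangian $\mathcal{L}(m,\lambda)=\sum_i w_i\ln m_i-\lambda(\sum_i m_i-B)$ and set $\partial\mathcal{L}/\partial m_i=w_i/m_i-\lambda=0$. This gives $m_i=w_i/\lambda$. Imposing $\sum_i m_i=B$ yields $\lambda=\sum_j w_j/B$, so
\[
m_i=B\,\frac{w_i}{\sum_j w_j}.
\]
Substituting $w_i=e^{\widehat{s}_i/\tau}$ gives exactly $B\,e^{\widehat{s}_i/\tau}/\sum_j e^{\widehat{s}_j/\tau}=a_i$.

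Third, I would establish that this stationary point is the global maximizer and that it is unique. The objective is strictly concave on the open positive orthant, because its Hessian is $\operatorname{diag}(-w_i/m_i^2)$, which is negative definite. The feasible set is an affine slice of a convex set, so a KKT point of a strictly concave function over a convex set is the unique global maximizer.

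The only delicate step is existence. The feasible region is not compact once we restrict to $m_i>0$, so I need to rule out the supremum being approached at the boundary. I would handle this with the strict concavity argument above, since the KKT point certifies optimality by the supporting hyperplane inequality. Alternatively, the objective tends to $-\infty$ at the boundary, so its superlevel sets are compact within the interior of the simplex.

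As a sanity check I would verify the two limits. When $\tau\to\infty$, all $w_i\to1$ and the allocation becomes uniform. When $\tau\to0$, the allocation concentrates on the passage with the largest $\widehat{s}_i$. This matches the interpretation given in the main text.
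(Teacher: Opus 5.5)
Your proposal is correct and follows essentially the same route as the paper: both rule out the boundary because $w_i\ln m_i\to-\infty$, both solve the Lagrangian stationarity condition $w_i/m_i=\lambda$ together with the budget constraint to get $m_i=B\,w_i/\sum_j w_j$, and both use strict concavity for uniqueness. Your supporting-hyperplane argument that the stationary point is the global maximizer makes explicit the existence step that the paper's proof leaves implicit.
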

\begin{proof}
The objective is strictly concave and tends to $-\infty$ as any $m_i\to0$, so
the optimum is unique and interior and the nonnegativity constraints are
inactive. Stationarity of the Lagrangian gives $w_i/m_i=\lambda$ for every $i$,
hence $m_i=w_i/\lambda$; the budget constraint yields $\lambda=\sum_j w_j/B$,
so $m_i=B\,w_i/\sum_j w_j$. Substituting $w_i=e^{\widehat{s}_i/\tau}$ gives
$m_i=B\,\operatorname{softmax}(\widehat{s}_i/\tau)$.
\end{proof}

Three remarks follow. First, by the Gibbs variational principle,
$p=\operatorname{softmax}(\widehat{s}/\tau)$ is also the unique maximizer of
$\sum_i p_i\widehat{s}_i+\tau H(p)$ over the probability simplex, where $H$
denotes the Shannon entropy: the allocation maximizes the total relevance of
the allocated tokens under an entropy penalty on over-concentration, and
$\tau$ prices that penalty. Second, the temperature limits recover the two
prevailing strategies, uniform soft compression as $\tau\to\infty$ and hard
selection of the top-scoring passages as $\tau\to0$, so both sit at the
degenerate ends of the same family. Third, the exact optimum of
\eqref{eq:app-log} assigns every passage a strictly positive share, matching
the full support of the softmax; exact zeros arise only from the integer
constraint treated below. Within-pool standardization together with $\tau$
fixes the utility parametrization $w_i=e^{\widehat{s}_i/\tau}$; since the
relevance head is distilled to standardized cross-encoder scores, the
temperature selects how sharply the teacher's relevance scale is converted
into an allocation.

\paragraph{Exponential-distortion bit allocation.}
Classical bit allocation instead minimizes a weighted distortion
$\sum_i w_i D(m_i)$ under the same budget, with the high-resolution
exponential model $D(m)=c\,e^{-\gamma m}$,
$\gamma>0$~\citep{Huang_1963,Shoham_1988}. Ignoring nonnegativity, the equal marginal
return condition $\gamma w_i c\,e^{-\gamma m_i}=\lambda$ gives
\[
m_i^{\star}=\frac{B}{K}+\frac{1}{\gamma}\bigl(\ln w_i-\overline{\ln w}\bigr),
\qquad \overline{\ln w}=\tfrac1K\textstyle\sum_j\ln w_j ,
\]
which becomes a clipped affine function of centered log-utility once
$m_i\ge0$ is restored: shares that would be negative are set to zero and the
released budget is redistributed among the rest (\emph{reverse
water-filling}). Under this model the softmax rule is exact only to first
order: expanding $e^{\widehat{s}_i/\tau}=1+\widehat{s}_i/\tau+O(\tau^{-2})$
and using $\sum_j\widehat{s}_j=0$ gives
$B\,\operatorname{softmax}(\widehat{s}_i/\tau)
=B/K+(B/K\tau)\,\widehat{s}_i+O(\tau^{-2})$, which agrees with
$m_i^{\star}$ when $\widehat{s}_i=(\ln w_i-\overline{\ln w})/\sigma$ and
$\tau=\gamma B/(K\sigma)$, with $\sigma$ the spread of the log-utilities. The
two models thus assign the softmax rule different statuses: exact optimizer
under logarithmic utility, first-order approximation of reverse
water-filling under exponential distortion.

\paragraph{Integer rounding.}
The softmax yields fractional $a_i$ summing to $B$, whereas tokens are integers.
The natural discretization is the nearest budget-preserving integer vector,
\begin{equation}
\min_{m\in\mathbb{Z}_{\ge0}^{K}}\ \|m-a\|_2
\quad\text{subject to}\quad \sum_i m_i=B .
\label{eq:app-round}
\end{equation}

\begin{proposition}[Largest-remainder is nearest rounding]
\label{prop:app-round}
Largest-remainder rounding (floor every $a_i$, then hand the
$R=B-\sum_i\lfloor a_i\rfloor$ leftover tokens to the passages with the largest
fractional parts $r_i=a_i-\lfloor a_i\rfloor$) returns an exact minimizer of
\eqref{eq:app-round}.
\end{proposition}
\begin{proof}
If some minimizer had $m_i-a_i\ge1$, then, since the deviations sum to zero, some
$j$ has $m_j-a_j<0$; moving one token from $i$ to $j$ changes the squared error
by $2[(m_j-a_j)-(m_i-a_i)]+2<0$, a contradiction. Hence every optimal $m_i$ is
$\lfloor a_i\rfloor$ or $\lceil a_i\rceil$ (which also gives nonnegativity).
Writing $m_i=\lfloor a_i\rfloor+\delta_i$ with $\delta_i\in\{0,1\}$ and
$\sum_i\delta_i=R$, the objective becomes
$\sum_i r_i^2+\sum_i\delta_i(1-2r_i)$, minimized by placing the $R$ ones on the
largest remainders.
\end{proof}

\section{Inference Efficiency}
\label{app:efficiency}

\begin{table}[h!]
\centering
\footnotesize
\caption{Inference efficiency on TriviaQA with $k=25$: per-query latency,
throughput, total compute, and peak GPU memory across inference-time
compression rates.}
\label{tab:efficiency}
\begin{tabular}{@{}l@{\hspace{5pt}}r@{\hspace{5pt}}r@{\hspace{5pt}}r@{\hspace{5pt}}r@{\hspace{5pt}}r@{}}
\toprule
Method & Rate & Lat.\ (ms) & Thr.\ (q/s) & TFLOPs & Mem.\ (GB) \\
\midrule
Full Context & $1\times$ & 667.4 & 1.50 & 59.9 & 53.5 \\
\midrule
\multirow{4}{*}{OSCAR}
& $16\times$  & 156.1 & 6.41 & 15.0 & 39.6 \\
& $32\times$  & 149.3 & 6.70 & 13.4 & 39.0 \\
& $64\times$  & 142.9 & 7.00 & 12.5 & 38.7 \\
& $128\times$ & 136.4 & 7.33 & 12.1 & 38.5 \\
\midrule
\multirow{4}{*}{\methodname}
& $16\times$  & 192.4 & 5.20 & 21.0 & 56.4 \\
& $32\times$  & 167.5 & 5.97 & 16.6 & 48.4 \\
& $64\times$  & 152.9 & 6.54 & 14.3 & 44.4 \\
& $128\times$ & 144.0 & 6.95 & 13.0 & 42.3 \\
\bottomrule
\end{tabular}
\end{table}

Table~\ref{tab:efficiency} reports per-query latency, throughput, total
compute, and peak GPU memory on TriviaQA with $k=25$ retrieved passages, for
all inference-time compression rates used in the main text.

All compressed configurations are substantially cheaper than Full Context:
latency drops by \(3.5\times\) to \(4.9\times\) and compute by \(2.9\times\)
to \(5.0\times\), confirming that soft compression removes the dominant cost
of processing large retrieval pools. Relative to OSCAR at the same
compression rate, \methodname adds the cost of relevance scoring and of
encoding the fixed candidate memory bank before selection. This overhead is
confined to the compressor and does not grow with the decoder context, so its
share shrinks as compression increases: from 36.3 ms (23\%) at \(16\times\)
to 10.0 ms (7\%) at \(64\times\) and 7.6 ms (6\%) at \(128\times\). Because
the quality of \methodname is nearly flat in the compression rate, it can be operated at aggressive compression
where this overhead is smallest: at \(64\times\) and \(128\times\),
\methodname matches or improves on OSCAR at \(16\times\) in latency,
throughput, and compute while providing higher answer quality. Peak memory
remains moderately higher than OSCAR because the candidate bank for the full
pool is encoded before allocation, yet at \(64\times\) it is already below
the Full Context footprint (44.4 versus 53.5 GB), and the gap narrows further
with compression.

\section{Hardware Settings}
\label{app:hard}

All experiments were conducted on a single node with eight NVIDIA A100-SXM4 GPUs (80GB each). We used \texttt{bfloat16}, FlashAttention-2, and PyTorch distributed training via Accelerate and DeepSpeed with data parallelism. Pre-training used four GPUs with an effective batch size of 1,024, while both fine-tuning stages used eight GPUs with effective batch sizes of 512. Evaluation was distributed across eight GPUs, whereas latency and throughput were measured on a single A100 GPU.

\section{Judge Metrics}
\label{app:judge}

\begin{figure*}[t]
    \centering
    \includegraphics[width=\textwidth]{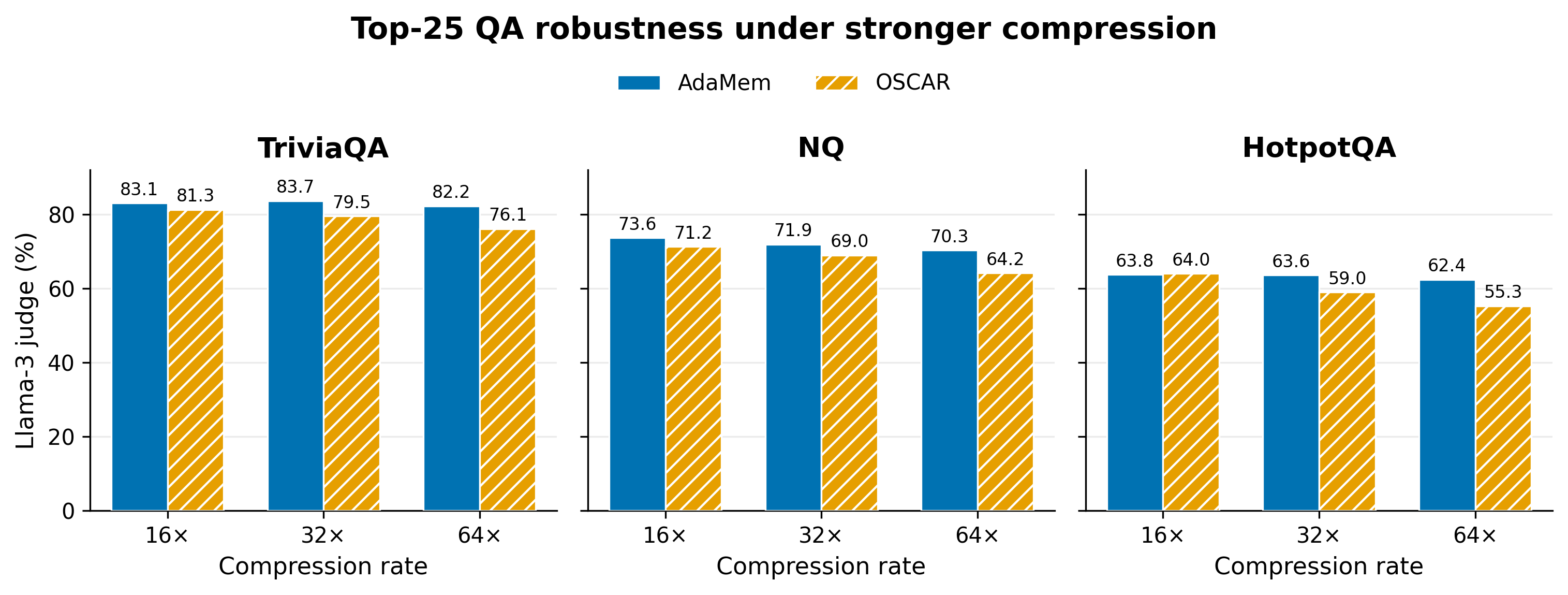}
    \caption{
    LLM-as-a-judge accuracy under increasingly strong inference-time compression.
    AdaMem consistently retains higher answer quality than OSCAR as the memory budget decreases.
    }
    \label{fig:judge-robustness}
\end{figure*}

We use Substring Match as the primary metric because it is deterministic and consistent with standard open-domain QA evaluation. However, it can overestimate answer quality when a model lists multiple candidate answers: the prediction receives full credit whenever it contains a gold-answer alias as a substring, even if the response is ambiguous, unsupported, or includes incorrect alternatives. We therefore additionally report LLM-as-a-judge scores, which evaluate the correctness of the generated answer in the context of the question and reference answer. The judge assigns a score of 1 for a correct answer, 0.5 for a partially correct answer, and 0 for a wrong answer; we report the mean score as a percentage. This complementary evaluation penalizes answer enumeration and verifies that improvements in Match correspond to genuinely correct answers rather than increased lexical overlap with the reference.

For each example, we provide the judge with the question, the reference answer, and the model prediction. We use the following prompt:

\begin{quote}
\small
\textbf{System prompt:} You are an evaluation tool. Answer with one of: 1: Correct, 0.5: Partially correct, 0: Wrong.

\textbf{User prompt:} Here is a question, a golden answer, and an AI-generated answer. Can you judge whether the AI-generated answer is correct according to the question and golden answer? Simply answer with one of: 1: Correct, 0.5: Partially correct, 0: Wrong.

Question: \texttt{\{question\}}\\
Golden answer: \texttt{\{answer\}}\\
Generated answer: \texttt{\{prediction\}}
\end{quote}

Figure~\ref{fig:judge-robustness} corroborates the Match-based robustness trends. At the training-time compression rate of $16\times$, AdaMem matches or exceeds OSCAR across all three benchmarks, improving by 1.8 points on TriviaQA and 2.4 points on NQ, while the methods are effectively tied on HotpotQA. The advantage grows under stronger inference-time compression. At $64\times$, AdaMem exceeds OSCAR by 6.1 points on TriviaQA and NQ, and by 7.1 points on HotpotQA. In contrast to OSCAR, AdaMem remains stable as the memory budget shrinks: its judge score decreases by only 0.9, 3.3, and 1.4 points from $16\times$ to $64\times$ on TriviaQA, NQ, and HotpotQA, respectively, compared with drops of 5.2, 7.0, and 8.7 points for OSCAR. These results show that relevance-aware allocation preserves answer-critical information under severe compression, rather than merely improving lexical overlap with reference answers.

\section{Training Schedule.}
Table~\ref{tab:training-schedule} summarizes the number of training examples and epochs used at each stage. We train for one epoch in Stages~1--2 and for three epochs during the final KILT-SCR fine-tuning stage.

\begin{table}[t]
\centering
\small
\caption{Training schedule for the three-stage procedure.}
\label{tab:training-schedule}
\begin{tabular}{lrr}
\toprule
Stage & Examples & Epochs \\
\midrule
Stage 1 & 2,000,000 & 1 \\
Stage 2 & 500,000 & 1 \\
Stage 3 (KILT-SCR) & 196,916 & 3 \\
\bottomrule
\end{tabular}
\end{table}
\end{document}